\theoremstyle{plain}
\newtheorem{theorem}{Theorem}[section]
\theoremstyle{definition}
\newtheorem{definition}[theorem]{Definition}
\theoremstyle{remark}
\newcommand{\proj}{MemR$^3$}
\definecolor{darkblue}{rgb}{0.1, 0.3, 0.7}
\definecolor{darkred}{rgb}{0.7,0.15,0.15}
\definecolor{darkgreen}{rgb}{0.1,0.5,0.1}
\newcommand{\gain}[1]{\textcolor{darkred}{\footnotesize\,(+#1)}}
\definecolor{viridis1}{RGB}{68, 1, 84}
\icmltitlerunning{Memory Retrieval via Reflective Reasoning for LLM Agents}
\begin{document}

\twocolumn[
  \icmltitle{{\proj}: Memory Retrieval via Reflective Reasoning for LLM Agents}



  \icmlsetsymbol{equal}{*}
  \icmlsetsymbol{corr}{$^\dag$}

  \begin{icmlauthorlist}
    \icmlauthor{Xingbo Du}{sch}
    \icmlauthor{Loka Li}{sch}
    \icmlauthor{Duzhen Zhang}{sch}
    \icmlauthor{Le Song}{corr,sch}
  \end{icmlauthorlist}

  \icmlaffiliation{sch}{Mohamed bin Zayed University of Artificial Intelligence}

  \icmlcorrespondingauthor{Le Song}{Le.Song@mbzuai.ac.ae}

  \icmlkeywords{Machine Learning, ICML}

  \vskip 0.3in
]



\printAffiliationsAndNotice{Emails: \{Xingbo.Du, Longkang.Li, Duzhen.Zhang, Le.Song\} @mbzuai.ac.ae\\}  

\begin{abstract}
Memory systems have been designed to leverage past experiences in Large Language Model (LLM) agents. However, many deployed memory systems primarily optimize compression and storage, with comparatively less emphasis on explicit, closed-loop control of memory retrieval. From this observation, we build memory retrieval as an autonomous, accurate, and compatible agent system, named {\proj}, which has two core mechanisms: 1) a \textit{router} that selects among \textit{retrieve}, \textit{reflect}, and \textit{answer} actions to optimize answer quality; 2) a global \textit{evidence-gap} tracker that explicitly renders the answering process transparent and tracks the evidence collection process.
This design departs from the standard retrieve-then-answer pipeline by introducing a closed-loop control mechanism that enables autonomous decision-making.
Empirical results on the LoCoMo benchmark demonstrate that {\proj} surpasses strong baselines on LLM-as-a-Judge score, and particularly, it improves existing retrievers across four categories with an overall improvement on RAG (+7.29\%) and Zep (+1.94\%) using \texttt{GPT-4.1-mini} backend, offering a plug-and-play controller for existing memory stores.
\end{abstract}

\section{Introduction}
With recent advances in large language model (LLM) agents, memory systems have become the focus of storing and retrieving long-term, personalized memories. They can typically be categorized into two groups: 1) Parametric methods~\citep{wang2024wise,fang2025alphaedit} that encode memories implicitly into model parameters, which can handle specific knowledge better but struggle in scalability and continual updates, as modifying parameters to incorporate new memories often risks catastrophic forgetting and requires expensive fine-tuning. 2) Non-parametric methods~\citep{xu2025amem,langmem_blog2025,chhikara2025mem0,rasmussen2025zep}, in contrast, store explicit external information, enabling flexible retrieval and continual augmentation without altering model parameters. However, they typically rely on heuristic retrieval strategies, which can lead to noisy recall, heavy retrieval, and increasing latency as the memory store grows.

Orthogonal to these works, this paper constructs an agentic memory system, {\proj}, i.e., \underline{Mem}ory \underline{R}etrieval system with \underline{R}eflective \underline{R}easoning, to improve retrieval quality and efficiency. 
Specifically, this system is constructed using LangGraph~\citep{langchain2025langgraph}, with a \textit{router} node selecting three optional nodes: 1) the \textit{retrieve} node, which is based on existing memory systems, can retrieve multiple times with updated retrieval queries. 2) the \textit{reflect} node, iteratively reasoning based on the current acquired evidence and the gaps between questions and evidence. 3) the \textit{answer} node that produces the final response using the acquired information. Within all nodes, the system maintains a global \textit{evidence-gap} tracker to update the acquired (\textit{evidence}) and missing (\textit{gap}) information.

\begin{figure*}[tb!]
\centering
\includegraphics[width=0.92\textwidth]{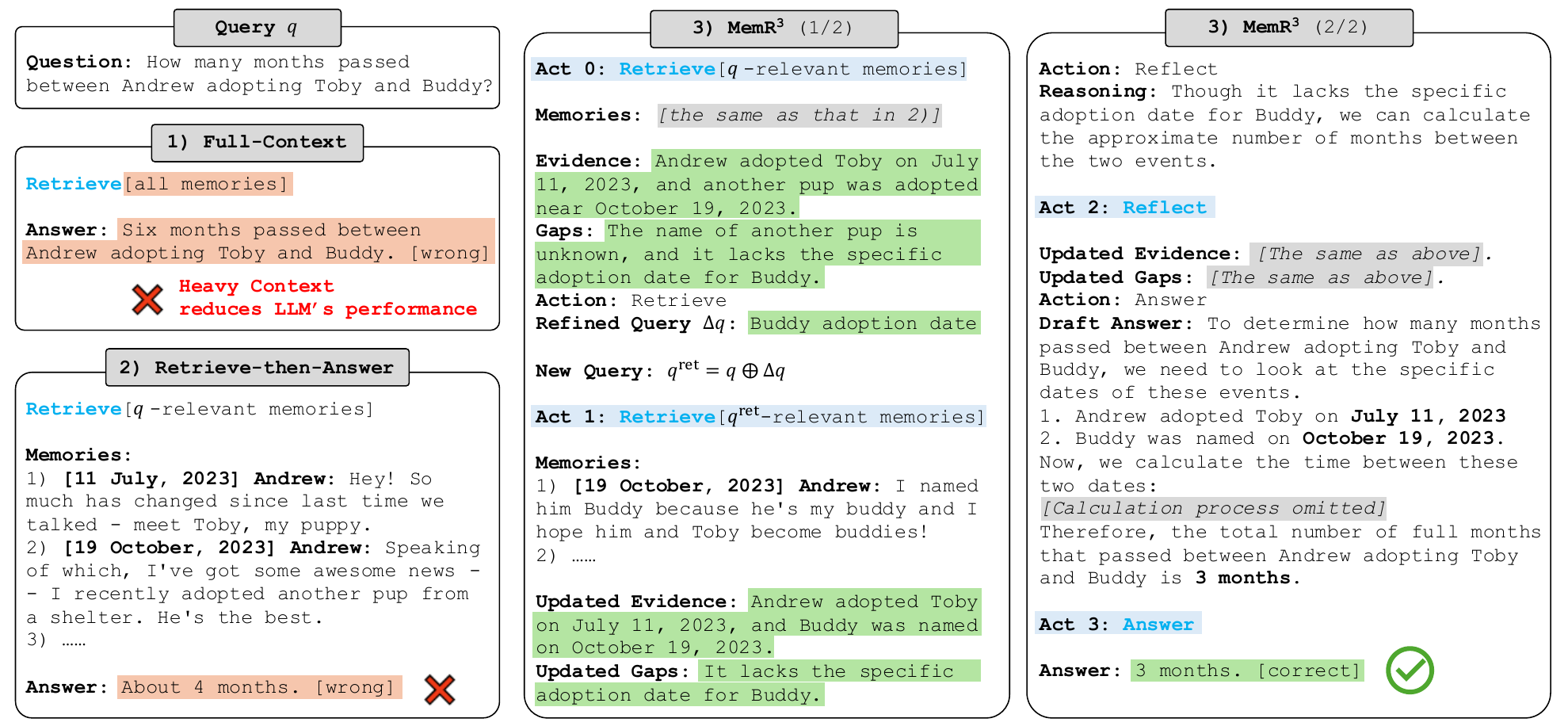}
\caption{\label{fig:motivation} Illustration of three memory-usage paradigms. \textbf{Full-Context} overloads the LLM with all memories and answers incorrectly; \textbf{Retrieve-then-Answer} retrieves relevant snippets but still miscalculates. In contrast, \textbf{\proj} iteratively retrieves and reflects using an evidence–gap tracker (Acts 0–3), refines the query about Buddy’s adoption date, and produces the correct answer (3 months).}
\vspace{-10pt}
\end{figure*}

The system has three core \textbf{advantages}: 1) \textit{Accuracy and efficiency}. By tracking the evidence and gap, and dynamically routing between retrieval and reflection, {\proj} minimizes unnecessary lookups and reduces noise, resulting in faster, more accurate answers. 2) \textit{Plug-and-play usage}. As a controller independent of existing retriever or memory storage, {\proj} can be easily integrated into memory systems, improving retrieval quality without architectural changes. 3) \textit{Transparency and explainability.} Since {\proj} maintains an explicit evidence-gap state over the course of an interaction, it can expose which memories support a given answer and which pieces of information were still missing at each step, providing a human-readable trace of the agent’s decision process.
We compare {\proj}, the Full-Context setting (which uses all available memories), and the commonly adopted retrieve-then-answer paradigm from a high-level perspective in Fig.~\ref{fig:motivation}. The contributions of this work are threefold in the following:

(1) \textbf{A specialized closed-loop retrieval controller for long-term conversational memory.}
We propose {\proj}, an autonomous controller that wraps existing memory stores and turns standard retrieve-then-answer pipelines into a closed-loop process with explicit actions (\texttt{retrieve} / \texttt{reflect} / \texttt{answer}) and simple early-stopping rules. This instantiates the general LLM-as-controller idea specifically for non-parametric, long-horizon conversational memory.

(2) \textbf{Evidence–gap state abstraction for explainable retrieval.}
{\proj} maintains a global evidence–gap state $(\mathcal{E}, \mathcal{G})$ that summarizes what has been reliably established in memory and what information remains missing. This state drives query refinement and stopping, and can be surfaced as a human-readable trace of the agent’s progress. We further formalize this abstraction via an abstract requirement space and prove basic monotonicity and completeness properties, which we later use to interpret empirical behaviors.

(3) \textbf{Empirical study across memory systems.}
We integrate {\proj} with both chunk-based RAG and a graph-based backend (Zep) on the LoCoMo benchmark and compare it with recent memory systems and agentic retrievers. Across backends and question types, {\proj} consistently improves LLM-as-a-Judge scores over its underlying retrievers.


\section{Related Work}
\subsection{Memory for LLM Agents}
Prior work on non-parametric agent memory systems spans a wide range of fields, including management and utilization~\citep{du2025rethinking}, by storing structured~\citep{rasmussen2025zep} or unstructured~\citep{zhong2024memorybank} external knowledge. Specifically, production-oriented agents such as MemGPT~\citep{packer2023memgpt} introduce an OS-style hierarchical memory system that allows the model to page information between context and external storage, and SCM~\citep{wang2023enhancing} provides a controller-based memory stream that retrieves and summarizes past information only when necessary. Additionally, Zep~\citep{rasmussen2025zep} builds a temporal knowledge graph that unifies and retrieves evolving conversational and business data.
A-Mem~\citep{xu2025amem} creates self-organizing, Zettelkasten-style memory that links and evolves over time. Mem0~\citep{chhikara2025mem0} extracts and manages persistent conversational facts with optional graph-structured memory. MIRIX~\citep{wang2025mirix} offers a multimodal, multi-agent memory system with six specialized memory types. LightMem~\citep{fang2025lightmem} proposes a lightweight and efficient memory system inspired by the Atkinson–Shiffrin model. Another related approach, Reflexion~\citep{shinn2023reflexion}, improves language agents by providing verbal reinforcement across episodes by storing natural-language reflections to guide future trials.

In this paper, we explicitly limit our scope to long-term conversational memory. Existing parametric approaches~\citep{wang2024wise,fang2025alphaedit}, KV-cache–based mechanisms~\citep{zhong2024memorybank,eyuboglu2025cartridges}, and streaming multi-task memory benchmarks~\citep{wei2025evo} are out of scope for this work. 
Orthogonal to existing storage, {\proj} is an autonomous retrieval controller that uses a global evidence–gap tracker to route different actions, enabling closed-loop retrieval.

\subsection{Agentic Retrieval-Augmented Generation}
Retrieval-Augmented Generation (RAG)~\citep{lewis2020retrieval} established the modern retrieve-then-answer paradigm; subsequent work explored stronger retrievers~\citep{karpukhin2020dense,izacard2021leveraging}. 
Beyond the RAG, recent work, such as Self-RAG~\citep{asai2024self}, Reflexion~\citep{shinn2023reflexion}, ReAct~\citep{yao2022react}, and FAIR-RAG~\citep{asl2025fair}, has shown that letting a language model (LM) decide when to retrieve, when to reflect, and when to answer can substantially improve multi-step reasoning and factuality in tool-augmented settings. {\proj} follows this general ``LLM-as-controller'' paradigm but applies it specifically to long-term conversational memory over non-parametric stores. Concretely, we adopt the idea of multi-step retrieval and self-reflection from these frameworks, but i) move the controller outside the base LM as a LangGraph program, ii) maintain an explicit evidence–gap state that separates verified memories from remaining uncertainties, and iii) interface this state with different memory backends (e.g., RAG and Zep~\citep{rasmussen2025zep}) commonly used in long-horizon dialogue agents. Our goal is not to replace these frameworks, but to provide a specialized retrieval controller that can be plugged into existing memory systems.


\begin{figure*}[tb!]
\centering
\includegraphics[width=0.96\textwidth]{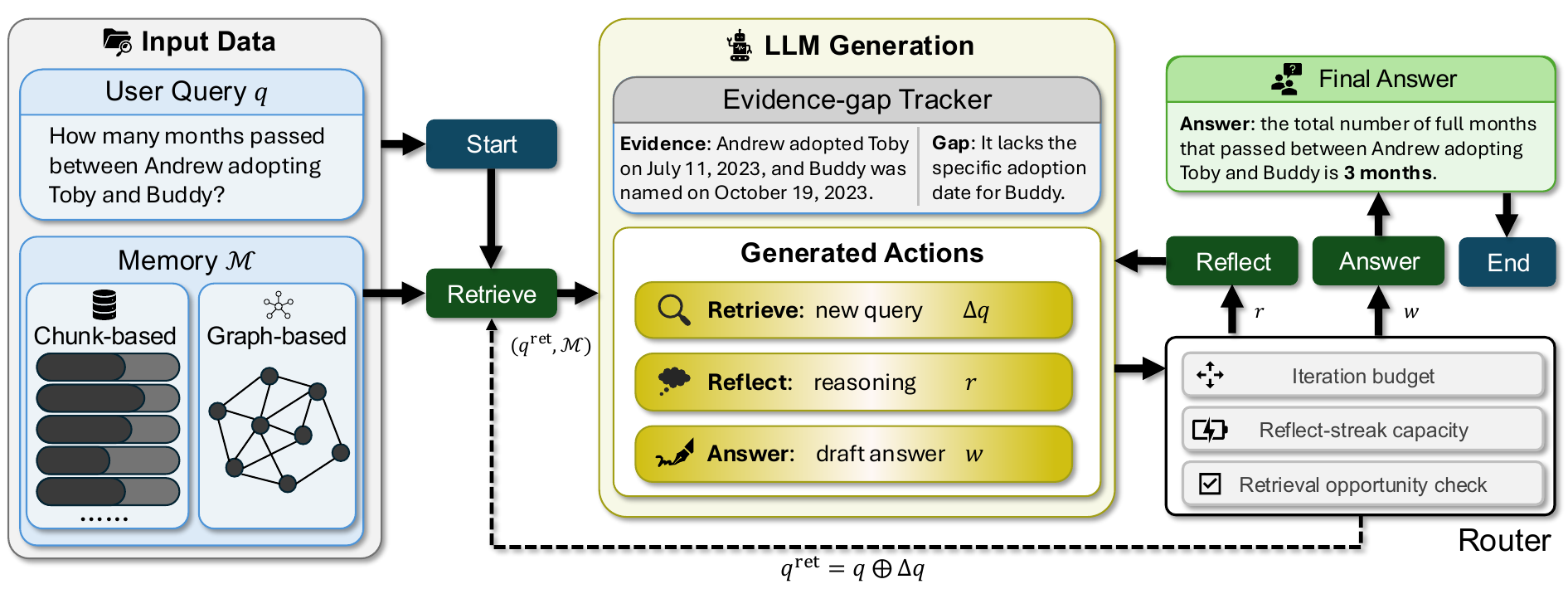}
\caption{\label{fig:pipeline} Pipeline of {\proj}. {\proj} transforms retrieval into a closed-loop process: a router dynamically switches between Retrieve, Reflect, and Answer nodes while a global evidence–gap tracker maintains what is known and what is still missing. This enables iterative query refinement, targeted retrieval, and early stopping, making {\proj} an autonomous, backend-agnostic retrieval controller.}
\vspace{-10pt}
\end{figure*}

\section{\proj}
\label{sec:method}
In this section, we first formulate the problem and provide preliminaries in Sec.~\ref{sec:prob}, and then give a system overview of {\proj} in Sec.~\ref{sec:overview}. Additionally, we describe the two core components that enable accurate and efficient retrieval: the router and the global evidence-gap tracker in Sec.~\ref{sec:router} and Sec.~\ref{sec:evi-gap}, respectively. 

\subsection{Problem Formulation and Preliminaries}\label{sec:prob}

We consider a long-horizon LLM agent that interacts with a user, forming a memory store $\mathcal{M} = \{ m_i \}_{i=1}^N$, where each memory item $m_i$ may correspond to a dialogue utterance, personal fact, structured record, or event, often accompanied by metadata such as timestamps or speakers.
Given a user query $q$, a retriever is applied to retrieve a set of memory snippets $\mathcal{S}$ that are useful for generating the final answer. Then, given designed prompt template $p$, the goal is to produce an answer $w$:
\begin{equation}
\begin{split}
    \mathcal{S} &\leftarrow \texttt{Retrieve}(q, \mathcal{M}). \\
    w &\leftarrow \texttt{LLM}(q, \mathcal{S}, p),
\end{split}
\end{equation}
which is accurate (consistent with all relevant memories in $\mathcal{M}$), efficient (requiring minimal retrieval cycles and low latency), and robust (stable under noisy, redundant, or incomplete memory stores) as much as possible.

Existing memory systems have done great work on the memory storage $\mathcal{M}$, but typically follow an \emph{open-loop} pipeline:
1) apply a single retrieval pass;  
2) feed the selected memories $\mathcal{S}$ into a generator to produce $\mathcal{A}$.  
This approach lacks adaptivity: retrieval does not incorporate intermediate reasoning, and the system never represents which information remains missing. This leads to both under-retrieval (insufficient evidence) and over-retrieval (long, noisy contexts).

{\proj} addresses these limitations by treating retrieval as an autonomous sequential decision process with explicit modeling of both acquired evidence and remaining gaps.

\subsection{System Overview}\label{sec:overview}

{\proj} is implemented as a directed agent graph comprising three operational nodes (Retrieve, Reflect, Answer) and one control node (Router) using LangGraph~\citep{langchain2025langgraph} (an open-source framework for building stateful, multi-agent workflows as graphs of interacting nodes). The agent maintains a mutable internal state
\begin{equation}
    s = (q,\mathcal{S}, \mathcal{E}, \mathcal{G}, k),
\end{equation}
where $q$ and $\mathcal{S}$ are the aforementioned original user query and retrieved snippets, respectively. $\mathcal{E}$ is the accumulated evidence relevant to $q$ and $\mathcal{G}$ is the remaining missing information (the ``gap'') between $q$ and $\mathcal{E}$. Moreover, we maintain the iteration index $k$ to control early stopping.

At each iteration $k$, the router chooses an action in 
$\{\texttt{retrieve},\ \texttt{reflect},\ \texttt{answer}\}$,
which determines the next node in the computation graph. The pipeline is shown in Fig.~\ref{fig:pipeline}. This transforms the classical retrieve-then-answer pipeline into a closed-loop controller that can repeatedly refine retrieval queries, integrate new evidence, and stop early once the information gap is resolved.

\subsection{Global Evidence-Gap Tracker}\label{sec:evi-gap}

A core design principle of {\proj} is to explicitly maintain and update two state variables: the evidence $\mathcal{E}$ and the gap $\mathcal{G}$. These variables summarize what the agent currently knows and what it still needs to know to answer the question.

At iteration $k$, the evidence $\mathcal{E}_k$ and gaps $\mathcal{G}_k$ are updated according to the retrieved snippets $\mathcal{S}_{k-1}$ (from the \texttt{retrieve} node) or reflective reasoning $\mathcal{F}_{k-1}$ (from the \texttt{reflect} node), together with last evidence $\mathcal{E}_{k-1}$ and gaps $\mathcal{G}_{k-1}$ at $k-1$ iteration:
\begin{equation}\label{eq:generate}
    \mathcal{E}_k, \mathcal{G}_k, a_k = \texttt{LLM}(q, \mathcal{S}_{k-1}, \mathcal{F}_{k-1}, \mathcal{E}_{k-1}, \mathcal{G}_{k-1}, p_k),
\end{equation}
where $p_k$ is the prompt template at $k$ iteration. Additionally, $a_k$ is the action at $k$ iteration, which will be introduced in Sec.~\ref{sec:router}. Note that we explicitly clarify in $p_k$ that $\mathcal{E}_k$ does not contain any information in $\mathcal{G}_k$, making evidence and gaps decoupled. An example is shown in Fig.~\ref{fig:evidence-gap} to illustrate the evidence-gap tracker.

\begin{figure}[tb!]
\centering
\includegraphics[width=0.48\textwidth]{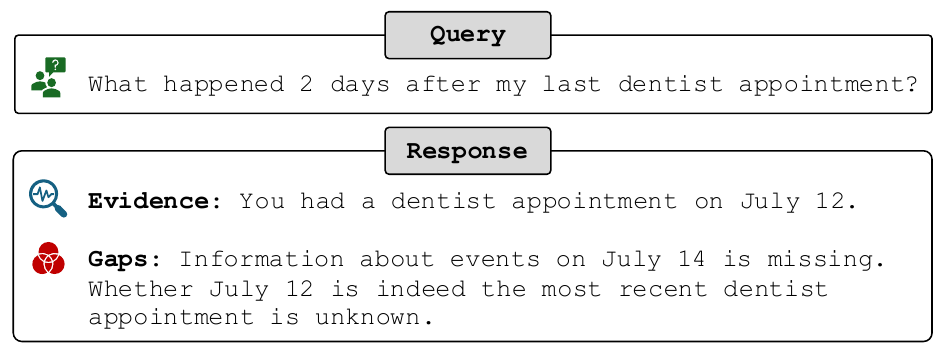}
\caption{\label{fig:evidence-gap} Example of the evidence-gap tracker for a specific query. At each step, the agent maintains an explicit summary of the evidence established and the information still missing. This state can be presented directly to users as a human-readable explanation of the agent’s progress in answering the query.}
\vspace{-10pt}
\end{figure}


Through the evidence-gap tracker, {\proj} maintains a structured and transparent internal state that continuously refines the agent’s understanding of both i) what has already been established as relevant evidence, and ii) what missing information still prevents a complete and faithful answer. This explicit decoupling enables {\proj} to reason under partial observability: as long as $\mathcal{G}_k \neq \varnothing$, the agent recognizes that its current knowledge is insufficient and can proactively issue a refined retrieval query to close the remaining gap. Conversely, when $\mathcal{G}_k$ becomes empty, the router detects that the agent has accumulated adequate evidence and can safely transition to the \texttt{answer} node.

Beyond guiding retrieval, the evidence-gap representation also makes the agent’s behavior more transparent. At any iteration $k$, the pair $(\mathcal{E}_k, \mathcal{G}_k)$ can be surfaced as a structured explanation of i) which memories the agent currently treats as relevant evidence and ii) which unresolved questions or missing details are preventing a confident answer. This trace provides users and developers with a faithful view of how the agent arrived at its final answer and why additional retrieval steps were taken (or not). In the following, we display an informal theorem that indicates the properties of the idealized evidence-gap tracker.

\noindent
\begin{theorem}[\textbf{[Informal]} Monotonicity, soundness, and completeness of the idealized evidence-gap tracker]
Under an idealized requirement space $R(q)$ for a specific query $q$, the evidence-gap tracker in {\proj} is monotone (evidence never decreases and gaps never increase), sound (every supported requirement eventually enters the evidence set), and complete (if every requirement $r \in R(q)$ is supported by some memory, the ideal gap eventually becomes empty).
\end{theorem}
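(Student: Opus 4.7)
The plan is to fix a precise idealization in which the informal claims admit short structural arguments, and then prove the three properties in turn. Formally, I would model $R(q)$ as a finite set of atomic requirements together with a support relation $\operatorname{supp}\colon R(q)\to 2^{\mathcal{M}}$. The idealized update corresponding to Eq.~\ref{eq:generate} at iteration $k$ takes the accumulated new content $\mathcal{C}_k \coloneqq \mathcal{S}_{k-1}\cup\mathcal{F}_{k-1}$ and sets $\mathcal{E}_k = \mathcal{E}_{k-1} \cup \{r\in\mathcal{G}_{k-1}:\operatorname{supp}(r)\cap\mathcal{C}_k\neq\varnothing\}$ and $\mathcal{G}_k = R(q)\setminus\mathcal{E}_k$, with initial conditions $\mathcal{E}_0=\varnothing$ and $\mathcal{G}_0=R(q)$. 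This immediately yields the invariant $\mathcal{E}_k\cup\mathcal{G}_k=R(q)$ with $\mathcal{E}_k\cap\mathcal{G}_k=\varnothing$, recovering the decoupling the paper already enforces via the prompt $p_k$.

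With this setup, monotonicity reduces to a one-line induction: the rule above only migrates elements from $\mathcal{G}_{k-1}$ into $\mathcal{E}_k$, so $\mathcal{E}_k\supseteq\mathcal{E}_{k-1}$ and $\mathcal{G}_k\subseteq\mathcal{G}_{k-1}$ for every $k\ge 1$. For the soundness claim I would introduce an explicit \emph{eventual coverage} assumption: for each supported requirement $r\in R(q)$ there exist a witness $m\in\operatorname{supp}(r)$ and an iteration $k^\star_r$ with $m\in\mathcal{C}_{k^\star_r}$. Given any such $r$, either $r$ already lies in $\mathcal{E}_{k^\star_r-1}$, or the update immediately promotes it into $\mathcal{E}_{k^\star_r}$; in either case $r$ enters the evidence set in finitely many steps.

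Completeness then drops out by uniformizing over $R(q)$: if every $r\in R(q)$ admits some supporting memory, soundness yields an entry time $k_r$ for each $r$, and setting $K\coloneqq\max_{r\in R(q)} k_r$ (finite because $R(q)$ is finite) forces $\mathcal{E}_K=R(q)$ and hence $\mathcal{G}_K=\varnothing$; monotonicity preserves this for all $k\ge K$. I would close with a brief remark that when a \texttt{reflect} step is allowed to synthesize derived facts from already-retrieved evidence, the same argument goes through after enlarging $\mathcal{C}_k$ to include these derived witnesses.

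The main obstacle, and the step I would treat with the most care, is isolating the right idealization: the update in Eq.~\ref{eq:generate} is realized by an LLM and therefore only approximates the set-theoretic rule above. I would therefore state two explicit assumptions -- (i) \emph{faithful extraction}, meaning the LLM's update agrees with the support-based rule on the content it sees, and (ii) \emph{eventual coverage} of the router--retriever loop -- and observe that any weakening of either admits simple counterexamples (e.g., a retriever that systematically ignores one region of $\mathcal{M}$, or an extractor that discards a supporting snippet). This framing keeps the theorem honest: it is a structural sanity check on the design of the evidence--gap abstraction rather than an end-to-end guarantee, which is also how the paper later uses it to interpret empirical behavior.
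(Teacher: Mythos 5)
Your proposal is correct and follows essentially the same route as the paper's Appendix~\ref{sec:formal_eg}: an idealized set-valued update on a finite requirement space, monotonicity by the union structure of the update, soundness directly from the promotion rule, and completeness by noting every requirement is eventually added (the paper conditions on coverage by $\bigcup_{j\le K} S_j$ where you phrase it as an explicit eventual-coverage assumption plus a max over entry times). Your added explicit initial conditions, the faithful-extraction caveat, and the remark on \texttt{reflect}-derived witnesses are harmless refinements of the same argument rather than a different proof.
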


Formally, in Appendix~\ref{sec:formal_eg} we define the abstract requirement space $R(q)$ and characterize the tracker as a set-valued update on $R(q)$, proving fundamental soundness, monotonicity, and completeness properties (Theorem~\ref{thm:eg_properties}), which we later use in Sec.~\ref{sec:other_exp} to interpret empirical phenomena such as why some questions cannot be fully resolved even after exhausting the iteration budget.

\subsection{LangGraph Nodes}\label{sec:router}
We explicitly define several nodes in the LangGraph framework, including \texttt{start}, \texttt{end}, \texttt{generate}, \texttt{router}, \texttt{retrieve},\ \texttt{reflect},\ \texttt{answer}. Specifically, \texttt{start} is always followed by \texttt{retrieve}, and \texttt{end} is reached after \texttt{answer}. \texttt{generate} is a LLM generation node, which is already introduced in Eq.~\ref{eq:generate}. In the following, we further introduce the \texttt{router} node and three action nodes.

\textbf{Router}. 
At each iteration, the router, an autonomous sequential controller, uses the current state and selects an action from $\{\texttt{retrieve}, \texttt{reflect}, \texttt{answer}\}$. Each action $a_k$ is accompanied by a textual generation:
\begin{equation}
{
\small
    a_k \in \{(\texttt{retrieve}, \Delta q_k), (\texttt{reflect}, f_k), (\texttt{answer}, w_k)\},
}
\end{equation}
where $\Delta q_k$ is a refinement query, $f_k$ is a reasoning content, and $w_k$ is a draft answer, which are utilized in the downstream action nodes. To ensure stability, \texttt{router} applies three deterministic constraints: 1) a maximum iteration budget $n_{\text{max}}$ that forces an \texttt{answer} action once the budget is exhausted, 2) a reflect-streak capacity $n_{\text{cap}}$ that forces a \texttt{retrieve} action when too many reflections have occurred consecutively, and 3) a retrieval-opportunity check that switches the action to \texttt{reflect} whenever the retrieval stage returns no snippets. The router's algorithm is shown in Alg.~\ref{alg:router}.
\begin{algorithm}[t]
\caption{Router policy in {\proj}}
\label{alg:router}
\begin{algorithmic}[1]
\STATE \textbf{Input:} query $q$, previous snippets $\mathcal{S}_{k-1}$, iteration $k$, budgets $n_{\text{max}}, n_{\text{cap}}$, current reflect-streak length $n_{\text{streak}}$.
\STATE \textbf{Output:} action $a_k$.
\IF{$k \ge n_{\text{max}}$}
  \STATE $a_k = \texttt{answer}$ \hfill $\triangleright$ Max iteration budget.
\ELSIF{$\mathcal{S}_{k-1} = \emptyset$}
  \STATE $a_k = \texttt{reflect}$ \hfill $\triangleright$ No retrieved snippets.
\ELSIF{$n_{\text{streak}} \ge n_{\text{cap}}$}
  \STATE $a_k = \texttt{retrieve}$ \hfill $\triangleright$ Max reflect streak.
\ELSE
  \STATE \textbf{pass} \hfill $\triangleright$ Keep the generated action.
\ENDIF
\end{algorithmic}
\end{algorithm}
These lightweight rules stabilize the decision process while preserving flexibility. We further introduce the detailed implementation of these constraints when introducing the system prompt in Appendix~\ref{appdx:prompt_system}.

\paragraph{Retrieve.}
Given a generated refinement $\Delta q_k$, the \texttt{retrieve} node constructs $q_k^{\mathrm{ret}} = q \oplus \Delta q_k$, where $\oplus$ means textual combination and $q$ is the original query, and then, fetches new memory snippets:
{
\small
\begin{equation}\label{eq:retrieve}
\begin{split}
\mathcal{S}_k = \texttt{Retrieve}(q_k^{\mathrm{ret}}, \mathcal{M}\backslash \mathcal{M}^{\text{ret}}_{k-1}), ~
\mathcal{M}^{\text{ret}}_k = \mathcal{M}^{\text{ret}}_{k-1} \cup \mathcal{S}_k.
\end{split}
\end{equation}
}

Snippets $\mathcal{S}_k$ are independently used for the next generation without history accumulation. Moreover, retrieved snippets are masked to prevent re-selection.

A major benefit of {\proj} is that it treats all concrete retrievers as plug-in modules.
Any retriever, e.g., vector search, graph memory, hybrid stores, or future systems, can be integrated into {\proj} as long as they return textual snippets, optionally with stable identifiers that can be masked once used.  
This abstraction ensures {\proj} remains lightweight, portable, and compatible.

\paragraph{Reflect.}
The \texttt{reflect} node incorporates the reasoning process $\mathcal{F}_{k-1}$, and invokes the router to update $(\mathcal{E}_k, \mathcal{G}_k, a_k)$ in Eq.~\ref{eq:generate}, where evidence and gaps can be re-summarized.

\paragraph{Answer.}
Once the router selects \texttt{answer}, the final answer is generated from the original query $q$, the draft answer $w_k$, evidence $\mathcal{E}_k$ using prompt $p_{w}$ from \citet{rasmussen2025zep}:
\begin{equation}
    w \leftarrow \texttt{LLM}(q, w_k, \mathcal{E}_k, p_{w}),
\end{equation}
The answer LLM is instructed to avoid hallucinations and remain faithful to evidence.

\subsection{Discussion on Efficiency}
Although {\proj} introduces extra routing steps, it maintains low overhead via 1) \textit{Compact evidence and gap summaries}: only short summaries are repeatedly fed into the router. 2) \textit{Masked retrieval}: each retrieval call yields genuinely new information. 3) \textit{Small iteration budgets}: typically, most questions can be answered using only a single iteration. Those complicated questions that require multiple iterations are constrained with a small maximum iteration budget.
These design choices ensure that {\proj} improves retrieval quality without large increases in retrieved tokens.

\section{Experiments}
The experiments are conducted on a machine with an AMD EPYC 7713P 64-core processor, an A100-SXM4-80GB GPU, and 512GB of RAM. Each experiment of {\proj} is repeated three times to report the average scores. Code available: \url{https://github.com/Leagein/memr3}.

\subsection{Experimental Protocols}
\paragraph{Datasets.} In line with baselines~\citep{xu2025amem,chhikara2025mem0}, we employ LoCoMo~\citep{maharana2024evaluating} dataset as a fundamental benchmark. LoCoMo has a total of 10 conversations across four categories: 1) multi-hop, 2) temporal, 3) open-domain, 4) single-hop, and 5) adversarial. We exclude the last `adversarial' category, following existing work~\citep {chhikara2025mem0,wang2025mirix}, since it is used to test whether unanswerable questions can be identified. Each conversation has approximately 600 dialogues with 26k tokens and 200 questions on average.

\textbf{Metrics.} We adopt the LLM-as-a-Judge (J) score to evaluate answer quality following \citet{chhikara2025mem0, wang2025mirix}. Compared with surface-level measures such as F1 or BLEU-1 \citep{xu2025amem, 10738994}, this metric better avoids relying on simple lexical overlap and instead captures semantic alignment.
Specifically, \texttt{GPT-4.1}~\citep{openai2025gpt41} is employed to judge whether the answer is correct according to the original question and the generated answer, following the prompt by~\citet{chhikara2025mem0}.

\begin{table*}[tb!]
\centering
\caption{\label{tab:main}LLM-as-a-Judge scores (\%, higher is better) for each question category in the LoCoMo~\citep{maharana2024evaluating} dataset. The best results using each LLM backend, except Full-Context, are in \textbf{bold}.
}
\resizebox{0.98\textwidth}{!}{
\begin{tabular}{c|l|llll|l}
\toprule
\textbf{LLM} & \textbf{Method} & \textbf{1. Multi-Hop} & \textbf{2. Temporal} & \textbf{3. Open-Domain} & \textbf{4. Single-Hop} & \textbf{Overall} \\
\midrule\midrule
\multirow{10}{*}{\rotatebox[origin=c]{90}{\textbf{GPT-4o-mini}}}
& A-Mem~\citep{xu2025amem}   & 61.70 & 64.49 & 40.62 & 76.63 & 69.06 \\
& LangMem~\citep{langmem_blog2025}  & 62.23 & 23.43 & 47.92 & 71.12 & 58.10 \\
& Mem0~\citep{chhikara2025mem0}     & 67.13 & 55.51 & 51.15 & 72.93 & 66.88 \\
\cmidrule(lr){2-7}
& Self-RAG~\citep{asai2024self}     & 69.15 & 64.80 & 34.38 & 88.31 & 76.46 \\
& RAG-CoT-RAG                       & 71.28 & 71.03 & 42.71 & 86.99 & 77.96 \\
\cmidrule(lr){2-7}
& Zep~\citep{rasmussen2025zep}      & 67.38 & 73.83 & 63.54 & 78.67 & 74.62 \\ 
& {\color{darkblue}\bfseries \proj} (ours, Zep backbone) & 69.39\gain{2.01} & 73.83\textcolor{gray}{\footnotesize (+0.00)} & \textbf{67.01}\gain{3.47} & 80.60\gain{1.93} & 76.26\gain{1.64} \\ \cmidrule(lr){2-7}
& RAG~\citep{lewis2020retrieval}      & 68.79 & 65.11 & 58.33 & 83.86 & 75.54 \\
& {\color{darkblue}\bfseries \proj} (ours, RAG backbone) & \textbf{71.39}\gain{2.60} & \textbf{76.22}\gain{11.11} & 61.11\gain{2.78} & \textbf{89.44}\gain{5.58} & \textbf{81.55}\gain{6.01} \\ 
\cmidrule(lr){2-7}
& Full-Context& 72.34 & 58.88 & 59.38 & 86.39 & 76.32 \\ 
\midrule\midrule
\multirow{9}{*}{\rotatebox[origin=c]{90}{\textbf{GPT-4.1-mini}}}
& A-Mem~\citep{xu2025amem}   & 71.99 & 74.77 & 58.33 & 79.88 & 76.00 \\
& LangMem~\citep{langmem_blog2025}  & 74.47 & 61.06 & 67.71 & 86.92 & 78.05 \\
& Mem0~\citep{chhikara2025mem0}     & 62.41 & 57.32 & 44.79 & 66.47 & 62.47 \\
\cmidrule(lr){2-7}
& Self-RAG~\citep{asai2024self}     & 75.89 & 75.08 & 54.17 & 90.12 & 82.08 \\
& RAG-CoT-RAG                       & 80.85 & 81.62 & 62.50 & 90.12 & 84.89 \\
\cmidrule(lr){2-7}
& Zep~\citep{rasmussen2025zep}      & 72.34 & 77.26 & 64.58 & 83.49 & 78.94 \\
& {\color{darkblue}\bfseries \proj} (ours, Zep backbone)      & 77.78\gain{5.44} & 77.78\gain{0.52} & 69.79\gain{5.21} & 84.42\gain{0.93} & 80.88\gain{1.94} \\
\cmidrule(lr){2-7}
& RAG~\citep{lewis2020retrieval}  & 73.05 & 73.52 & 62.50 & 85.90 & 79.46 \\
& {\color{darkblue}\bfseries \proj} (ours, RAG backbone) & \textbf{81.20}\gain{8.15} & \textbf{82.14}\gain{8.62} & \textbf{71.53}\gain{9.03} & \textbf{92.17}\gain{6.27} & \textbf{86.75}\gain{7.29} \\ 
\cmidrule(lr){2-7}
& Full-Context & 86.43 & 86.82 & 71.88 & 93.73 & 89.00 \\
\bottomrule
\end{tabular}
}
\vspace{-10pt}
\end{table*}

\textbf{Baselines.} 
We select four groups of advanced methods as baselines: 1) memory systems, including A-mem~\citep{xu2025amem}, LangMem~\citep{langmem_blog2025}, and Mem0~\citep{chhikara2025mem0}; 2) agentic retrievers, like Self-RAG~\citep{asai2024self}. We also design a RAG-CoT-RAG (RCR) pipeline beyond ReAct~\citep{yao2022react} as a strong agentic retriever baseline combining both RAG~\citep{lewis2020retrieval} and Chain-of-Thoughts (CoT)~\citep{wei2022chain}; 3) backend baselines, including chunk-based (RAG~\citep{lewis2020retrieval}) and graph-based (Zep~\citep{rasmussen2025zep}) memory storage, demonstrating the plug-in capability of {\proj} across different retriever backends; 4) Moreover, `Full-Context' is widely used as a strong baseline and, when the entire conversation fits within the model window, serves as an empirical upper bound on J score~\citep{chhikara2025mem0,wang2025mirix}. More detailed introduction of these baselines is shown in Appendix~\ref{appdx:baseline}.

\textbf{Other Settings.} Other experimental settings and protocols are shown in Appendix~\ref{appdx:exp_other}.

\textbf{LLM Backend.}
We reviewed recent work and found that it most frequently used \texttt{GPT-4o-mini}~\citep{openai2024gpt4omini}, as it is inexpensive and performs well. While some work~\citep{wang2025mirix} also includes \texttt{GPT-4.1-mini}~\citep{openai2025gpt41}, we set both of them as our LLM backends. In our main results, {\proj} is performed at temperature 0.

\subsection{Main Results}
\textbf{Overall.}
Table~\ref{tab:main} reports LLM-as-a-Judge (J) scores across four LoCoMo categories.
Across both LLM backends and memory backbones, {\proj} consistently outperforms its underlying retrievers (RAG and Zep) and achieves strong overall J scores.
Under \texttt{GPT-4o-mini}, {\proj} lifts the overall score of Zep from 74.62\% to 76.26\%, and RAG from 75.54\% to 81.55\%, with the latter even outperforming the Full-Context baseline (76.32\%).
With \texttt{GPT-4.1-mini}, we see the same pattern: {\proj} improves Zep from 78.94\% to 80.88\% and RAG from 79.46\% to 86.75\%, making the RAG-backed variant the strongest retrieval-based system and narrowing the gap to Full-Context (89.00\%).
As expected, methods instantiated with \texttt{GPT-4.1-mini} are consistently stronger than their \texttt{GPT-4o-mini} counterparts.
Full-Context also benefits substantially from the stronger LLM, but under \texttt{GPT-4o-mini} it lags behind the best retrieval-based systems, especially on temporal and open-domain questions.
Overall, these results indicate that closed-loop retrieval with an explicit evidence–gap state yields gains primarily orthogonal to the choice of LLM or memory backend, and that {\proj} particularly benefits from backends that expose relatively raw snippets (RAG) rather than heavily compressed structures (Zep).

\textbf{Multi-hop.}
Multi-hop questions require chaining multiple pieces of evidence and, therefore, directly test our reflective controller.
Under \texttt{GPT-4o-mini}, {\proj} improves both backbones on this category: the multi-hop J score rises from 68.79\% to 71.39\% on RAG and from 67.38\% to 69.39\% on Zep, bringing both close to the Full-Context score (72.34\%).
With \texttt{GPT-4.1-mini}, the gains are more pronounced: {\proj} boosts RAG from 73.05\% to 81.20\% and Zep from 72.34\% to 77.78\%, outperforming all other baselines and approaching the Full-Context upper bound (86.43\%).
These consistent gains suggest that explicitly tracking evidence and gaps helps the agent coordinate multiple distant memories via iterative retrieval, rather than relying on a single heuristic pass.

\textbf{Temporal.}
Temporal questions stress the model’s ability to reason about ordering and dating of events over long horizons, where both under- and over-retrieval can be harmful.
Here, {\proj} delivers some of its most considerable relative improvements.
For \texttt{GPT-4o-mini}, the temporal J score of RAG jumps from 65.11\% to 76.22\%, outperforming both the original RAG and the Zep baseline (73.83\%), while {\proj} with a Zep backbone preserves Zep’s strong temporal accuracy (73.83\%).
Full-Context performs notably worse in this regime (58.88\%), indicating that simply supplying all dialogue turns can hinder temporal reasoning under a weaker backbone.
With \texttt{GPT-4.1-mini}, {\proj} again significantly strengthens temporal reasoning: RAG improves from 73.52\% to 82.14\%, and Zep from 77.26\% to 77.78\%, making the RAG-backed {\proj} the best retrieval-based system and closing much of the remaining gap to Full-Context (86.82\%).
These findings support our design goal that explicitly modeling ``what is already known” versus ``what is still missing” helps the agent align and compare temporal relations more robustly.

\textbf{Open-Domain.}
Open-domain questions are less tied to the user’s personal timeline and often require retrieving diverse background knowledge, which makes retrieval harder to trigger and steer.
Despite this, {\proj} consistently improves over its backbones.
Under \texttt{GPT-4o-mini}, {\proj} increases the open-domain J score of RAG from 58.33\% to 61.11\% and that of Zep from 63.54\% to 67.01\%, with the Zep-backed variant achieving the best performance among all methods in this block, surpassing Full-Context (59.38\%).
With \texttt{GPT-4.1-mini}, the gains become even larger: {\proj} lifts RAG from 62.50\% to 71.53\% and Zep from 64.58\% to 69.79\%, nearly matching the Full-Context baseline (71.88\%) and again outperforming all other baselines.
We attribute these improvements to the router’s ability to interleave retrieval with reflection: when initial evidence is noisy or off-topic, {\proj} uses the gap representation to reformulate queries and pull in more targeted external knowledge rather than committing to an early, brittle answer.

\textbf{Single-hop.}
Single-hop questions can often be answered from a single relevant memory snippet, so the potential headroom is smaller, but {\proj} still yields consistent gains.
With \texttt{GPT-4o-mini}, {\proj} raises the single-hop J score from 78.67\% to 80.60\% on Zep and from 83.86\% to 89.44\% on RAG, with the latter surpassing the Full-Context baseline (86.39\%).
Under \texttt{GPT-4.1-mini}, {\proj} improves Zep from 83.49\% to 84.42\% and RAG from 85.90\% to 92.17\%, making the RAG-backed variant the strongest method overall aside from Full-Context (93.73\%).
Together with the iteration-count analysis in Sec.~\ref{sec:other_exp}, these results suggest that the router often learns to terminate early on straightforward single-hop queries, gaining accuracy primarily through better evidence selection rather than additional reasoning depth, and thus adding little overhead in tokens or latency.

\subsection{Other Experiments}\label{sec:other_exp}
We ablate various hyperparameters and modules to evaluate their impact in {\proj} with the RAG retriever. During these experiments, we utilize \texttt{GPT-4o-mini} as a consistent LLM backend.

\begin{table}[tb!]
\centering
\caption{\label{tab:abaltion}Ablation studies. Best results are in \textbf{bold}.
}
\resizebox{0.45\textwidth}{!}{
\begin{threeparttable}
\begin{tabular}{l|cccc|c}
\toprule
\textbf{Method} & \textbf{MH}* & \textbf{Temporal} & \textbf{OD}* & \textbf{SH}* & \textbf{Overall} \\
\midrule\midrule
RAG & 68.79 & 65.11 & 58.33 & 83.86 & 75.54 \\
{\proj} & \textbf{71.39} & \textbf{76.22} & 61.11 & \textbf{89.44} & \textbf{81.55} \\
~ w/o mask & 62.41 & 68.54 & 55.21 & 72.17 & 68.54 \\
~ w/o $\Delta q_k$ & 66.67 & 75.08 & 60.42 & 83.37 & 77.11 \\
~ w/o reflect & 65.25 & 73.83 & \textbf{61.46} & 83.37 & 76.65 \\
\bottomrule
\end{tabular}
\begin{tablenotes}
\item[*] MH = Multi-hop; OD = Open-domain; SH = Single-hop.
\end{tablenotes}
\end{threeparttable}
}
\vspace{-10pt}
\end{table}

\paragraph{Ablation Studies.} 
We first examine the contribution of the main design choices in {\proj} by progressively removing them while keeping the RAG retriever and all hyperparameters fixed. As shown in Table~\ref {tab:abaltion}, disabling masking for previously retrieved snippets (w/o mask) results in the largest degradation, reducing the overall J score from 81.55\% to 68.54\% and harming every category. This confirms that repeatedly surfacing the same memories wastes budget and fails to effectively close the remaining gaps. Removing the refinement query $\Delta q_k$ (w/o $\Delta q_k$) has a milder effect: temporal and open-domain performance changed a little, but multi-hop and single-hop scores decline significantly, indicating that tailoring retrieval queries from the current evidence-gap state is particularly beneficial for simpler questions. Disabling the \texttt{reflect} node (w/o reflect) similarly reduces performance (from 81.55\% to 76.65\%), with notable drops on multi-hop and single-hop questions, highlighting the value of interleaving reasoning-only steps with retrieval. Note that in Table~\ref{tab:abaltion}, the raw retrieved snippets are only visible to the vanilla RAG.

\paragraph{Effect of $n_{\text{chk}}$ and $n_{\text{max}}$.}
We first choose a nominal configuration for {\proj} (with a RAG retriever) by arbitrarily setting the number of chunks per iteration $n_{\text{chk}} = 3$ and the max iteration budget $n_{\text{max}} = 5$. In Fig.~\ref{fig:num_chunks}, we fix $n_{\text{max}} = 5$ and perform ablations over $n_{\text{chk}} \in \{1, 3, 5, 7, 9\}$. In Fig.~\ref{fig:max_iterations}, we fix $n_{\text{chk}} = 3$ and perform ablations over $n_{\text{max}} \in \{1, 2, 3, 4, 5\}$. Considering both of the LLM-as-a-Judge score and token consumption, we eventually choose $n_{\text{chk}} = 5$ and $n_{\text{max}} = 5$ in all main experiments.

\begin{figure}[tb!]
\centering
\subfloat[number of chunks $n_{\text{chk}}$]{\includegraphics[width=0.24\textwidth]{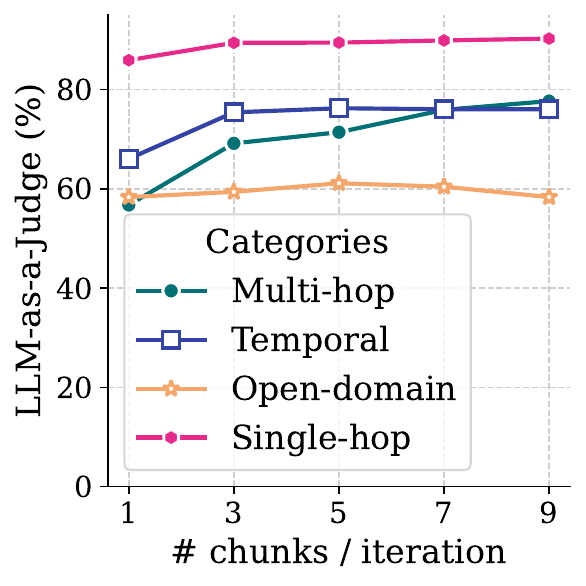}\label{fig:num_chunks}}
\subfloat[max iterations $n_{\text{max}}$]{\includegraphics[width=0.24\textwidth]{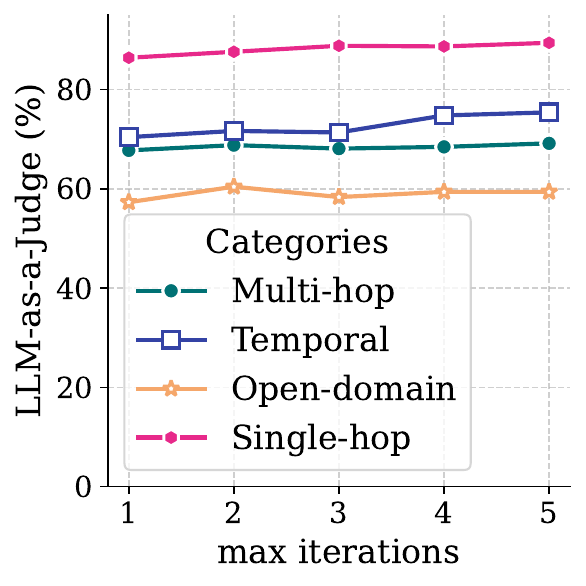}\label{fig:max_iterations}}
\caption{LLM-as-a-Judge score (\%) with different a) number of chunks per iteration and b) max iterations.}
\label{fig:ablation1}
\vspace{-10pt}
\end{figure}




\paragraph{Iteration count.}  
We further inspect how often {\proj} actually uses multiple retrieve/reflect/answer iterations when $n_{\text{chk}}=5$ and $n_{\text{max}}=5$ (Fig.~\ref{fig:ablation2}).
Overall, most questions are answered after a single iteration, and this effect is particularly strong for Single-hop questions.
An exception is open-domain questions, for which 58 of 96 require continuous retrieval or reflection until the maximum number of iterations is reached, highlighting the inherent challenges and uncertainty in these questions. 
Additionally, only a small fraction of questions terminate at intermediate depths (2–4 iterations), suggesting that {\proj} either becomes confident early or uses the whole iteration budget when the gap remains non-empty. 

We observe that this distribution arises from two regimes.
On the one hand, straightforward questions require only a single piece of evidence and can be resolved in a single iteration, consistent with intuition.
From the perspective of the idealized tracker in Appendix~\ref{sec:formal_eg}, these are precisely the queries for which every requirement $r \in R(q)$ is supported by some retrieved memory item $m \in \bigcup_{j \le k} S_j$ with $m \models r$, so the completeness condition in Theorem~\ref{thm:eg_properties} is satisfied and the ideal gap $G_k^\star$ becomes empty.

On the other hand, some challenging questions are inherently underspecified given the stored memories, so the gap cannot be fully closed even if the agent continues to refine its query.
For example, for the question ``\textit{When did Melanie paint a sunrise?}'', the correct answer in our setup is simply ``\textit{2022}'' (the year).
{\proj} quickly finds this year at the first iteration based on evidence ``\textit{Melanie painted the lake sunrise image last year (2022).}''.
However, under the idealized abstraction, the requirement set $R(q)$ implicitly includes an exact date predicate (year--month--day), and no memory item $m \in \bigcup_{j \le K} S_j$ satisfies $m \models r$ for that finer-grained requirement.
Thus, the precondition of Theorem~\ref{thm:eg_properties}(3) is violated, and $G_k^\star$ never becomes empty; the practical tracker mirrors this by continuing to search for the missing specificity until it hits the maximum iteration budget.
In such cases, the additional token consumption is primarily due to a mismatch between the question's granularity and the available memory, rather than a failure of the agent.

\begin{figure}[tb!]
\centering
\includegraphics[width=0.48\textwidth]{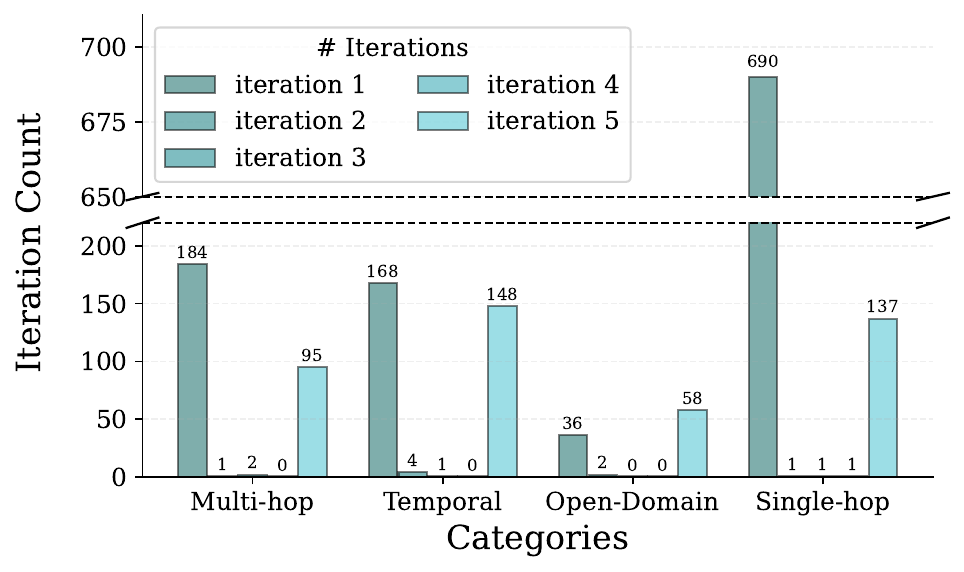}\label{fig:iteration_count}
\caption{Number of questions requiring different numbers of iterations before final answers, across four categories.}
\label{fig:ablation2}
\vspace{-10pt}
\end{figure}

\subsection{Revisiting the Evaluation Protocols of LoCoMo}
During our reproduction of the baselines, we identified a latent ambiguity in the LoCoMo dataset's category indexing. Specifically, the mapping between numerical IDs and semantic categories (e.g., Multi-hop vs. Single-hop) implies a non-trivial alignment challenge. We observed that this ambiguity has led to category misalignment in several recent studies~\citep{chhikara2025mem0,wang2025mirix}, potentially skewing the granular analysis of agent capabilities.

To ensure a rigorous and fair comparison, we recalibrate the evaluation protocols for all baselines. In Table~\ref{tab:main}, we report the performance based on the corrected alignment, where the alignment can be induced by the number of questions in each category. We believe this clarification contributes to a more accurate understanding of the current SOTA landscape. Details of the dataset realignment are illustrated in Appendix~\ref{appdx:data_misalign}.

\section{Conclusion}
In this work, we introduce {\proj}, an autonomous memory-retrieval controller that transforms standard retrieve-then-answer pipelines into a closed-loop process via a LangGraph-based sequential decision-making framework. By explicitly maintaining what is known and what remains unknown using an evidence-gap tracker, {\proj} can iteratively refine queries, balance retrieval and reflection, and terminate early once sufficient evidence has been gathered. Our experiments on the LoCoMo benchmark show that {\proj} consistently improves LLM-as-a-Judge scores over strong memory baselines, while incurring only modest token and latency overhead and remaining compatible with heterogeneous backends. Beyond these concrete gains, {\proj} offers an explainable abstraction for reasoning under partial observability in long-horizon agent settings.

However, we acknowledge some \textit{limitations} for future work: 1) {\proj} requires an existing retriever or memory structure, and particularly, the performance greatly depends on the retriever or memory structure. 2) The routing structure could lead to token waste for answering simple questions. 3) {\proj} is currently not designed for multi-modal memories like images or audio.

\newpage
\nocite{langley00}

\bibliography{ref}
\bibliographystyle{icml2026}

\newpage
\appendix
\onecolumn
\section{Prompts}

\subsection{System prompt of the \texttt{generate} node}\label{appdx:prompt_system}
The system prompt is defined as follows, where the ``decision\_directive'' instructs the maximum iteration budges, reflect-streak capacity, and retrieval opportunity check, introduced in Sec.~\ref{sec:router}. Generally, ``decision\_directive'' is a textual instruction:
\textit{``reflect" if you need to think about the evidence and gaps; choose ``answer" ONLY when evidence is solid and no gaps are noted; choose ``retrieve" otherwise.} However, when the maximum iterations budget is reached, ``decision\_directive'' is set as ``answer'' to stop early. When the reflection reaches the maximum capacity, ``decision\_directive'' is set as ``retrieve'' to avoid repeated ineffective reflection. When there is no useful retrieval remains, ``decision\_directive'' is set as ``reflect'' to avoid repeated ineffective retrieval. Through these constraints, the agent can avoid infinite ineffective actions to maintain stability.

{\footnotesize
\begin{tcolorbox}[systemstyle, title=System Prompt]
You are a memory agent that plans how to gather evidence before producing the final response shown to the user.\\
Always reply with a strict JSON object using this schema:\\
- evidence: JSON array of concise factual bullet strings relevant to the user's question; preserve key numbers/names/time references. If exact values are unavailable, include the most specific verified information (year/range) without speculation. Never mention missing or absent information here -- ``gaps" will do that.\\
- gaps: gaps between the question and evidence that prevent a complete answer.\\
- decision: one of [``retrieve",``answer",``reflect"]. Choose \textcolor{blue}{\{decision\_directive\}}.
\\ \\
Only include these conditional keys:\\
- retrieval\_query: only when decision == ``retrieve". Provide a STANDALONE search string; short (5-15 tokens).\\
\hspace*{1em}* BAD Query: ``the date" (lacks context).\\
\hspace*{1em}* GOOD Query: ``graduation ceremony date" (specific).\\
\hspace*{1em}* STRATEGY: \\
\hspace*{2em}1. Search for the ANCHOR EVENT. (e.g. Question: ``What happened 2 days after X?", Query: ``timestamp of event X").\\
\hspace*{2em}2. Search for the MAPPED ENTITY. (e.g. Question: ``Weather in the Windy City", Query: ``weather in Chicago").\\
- detailed\_answer: only when decision == ``answer"; response using current evidence (keep absolute dates, avoid speculation). If evidence is limited, provide only what is known, or make cautious inferences grounded solely in that limited evidence. Do not mention missing or absent information in this field.\\
- reasoning: only when decision == ``reflect"; if further retrieval is unlikely, use current evidence to think step by step through the evidence and gaps, and work toward the answer, including any time normalization.
\\ \\
Never include extra keys or any text outside the JSON object.
\end{tcolorbox}
}

\subsection{User prompt of the \texttt{generate} node}\label{appdx:prompt_user}
Apart from the system, the user prompt is responsible to feed additional information to the LLM. Specifically, at the $k$ iteration, ``question'' is the original question $q$. ``evidence\_block'' and ``gap\_block'' are evidence $\mathcal{E}_k$ and gaps $\mathcal{G}_k$ introduced in Sec.~\ref{sec:evi-gap}. ``raw\_block'' is the retrieved raw snippets $\mathcal{S}_k$ in Eq.\ref{eq:retrieve}. ``reasoning\_block'' is the reasoning content $\mathcal{F}_k$ in Sec.~\ref{sec:router}. ``last\_query'' is the refined query $\Delta q_k$ introduced in Sec.~\ref{sec:router} that enables the new query to be different from the prior one. Note that these fields can be left empty if the corresponding information is not present.

{\footnotesize
\begin{tcolorbox}[systemstyle, title=User Prompt]
\# Question\\
\textcolor{blue}{\{question\}}
\\ \\
\# Evidence\\
\textcolor{blue}{\{evidence\_block\}}
\\ \\
\# Gaps\\
\textcolor{blue}{\{gap\_block\}}
\\ \\
\# Memory snippets\\
\textcolor{blue}{\{raw\_block\}}
\\ \\
\# Reasoning\\
\textcolor{blue}{\{reasoning\_block\}}
\\ \\
\# Prior Query\\
\textcolor{blue}{\{last\_query\}}
\\ \\
\# INSTRUCTIONS:\\
1. Update the evidence as a JSON ARRAY of concise factual bullets that directly help answer the question (preserve key numbers/names/time references; use the most specific verified detail without speculation).\\
2. Update gaps: remove resolved items, add new missing specifics blocking a full answer, and set to ``None" when nothing is missing.\\
3. If you produce a retrieval\_query, make sure it differs from the previous query.\\
4. Decide the next action and return ONLY the JSON object described in the system prompt.
\end{tcolorbox}
}

\section{Formalizing the Evidence-Gap Tracker}
\label{sec:formal_eg}

A central component of {\proj} is the evidence-gap tracker introduced in Sec.~\ref{sec:evi-gap}, which maintains an evolving summary of i) what information has been reliably established from memory and ii) what information is still missing to answer the query.
While the practical implementation of this tracker is based on LLM-generated summaries, we introduce an idealized formal abstraction that clarifies its intended behavior, enables principled analysis, and provides a foundation for studying correctness and robustness.
This abstraction does not assume perfect extraction; rather, the LLM acts as a stochastic approximator to the idealized tracker.

\begin{definition}[Idealized Requirement Space]
\label{def:req_space}
For a user query $q$, we define a finite set of \emph{atomic information requirements}, which specify the minimal facts needed to fully answer the query:
\begin{equation}
    R(q) = \{ r_1, r_2, \dots, r_m \}.
\end{equation}
\end{definition}

For example, for the question
``How many months passed between events $A$ and $B$?'', the requirement set can be
\begin{equation}
    R(q) = \{\text{date}(A), \text{date}(B)\}.
\end{equation}
Each requirement $r \in R(q)$ is associated with a symbolic predicate (e.g., a timestamp, entity attribute, or event relation), and $R(q)$ provides the semantic target against which retrieved memories are judged.

\begin{definition}[Memory-Support Relation]
\label{def:mem_support}
Let $\mathcal{M}$ be the memory store and $S_k \subseteq \mathcal{M}$ denote the snippets retrieved at iteration $k$.
We define a relation $m \models r$ to indicate that memory item $m \in \mathcal{M}$ contains sufficient information to support requirement $r \in R(q)$.
Formally, $m \models r$ holds if the textual content of $m$ contains a minimal witness (e.g., a timestamp, entity mention, or explicit assertion) matching the predicate corresponding to $r$.
The matching criterion may be implemented via deterministic pattern rules or LLM-based semantic matching; our analysis is agnostic to this choice.
\end{definition}

\begin{definition}[Idealized Evidence-Gap Update Rule]
\label{def:eg_update}
At iteration $k$, the idealized tracker maintains two sets:
i) the evidence $E_k \subseteq R(q)$ and ii) the gaps $G_k = R(q) \setminus E_k$.
Given newly retrieved snippets $S_k$, the ideal updates are
\begin{equation}
    E_k^\star = E_{k-1} \cup 
    \big\{ r \in R(q) \,\big|\, \exists m \in S_k,\; m \models r \big\},
    \qquad
    G_k^\star = R(q) \setminus E_k^\star.
\end{equation}
\end{definition}
In this abstraction, the tracker monotonically accumulates verified requirements and removes corresponding gaps, providing a clean characterization of the desired system behavior independent of noise.

\subsection{Practical Instantiation via LLM Summaries}
In {\proj}, the tracker is instantiated through LLM-generated summaries:
\begin{equation}
    (E_k, G_k) = \mathrm{LLM}\big(q, S_k, E_{k-1}, G_{k-1}\big),
\end{equation}
where the prompt explicitly instructs the model to:
(i) extract concise factual bullets relevant to $q$,
(ii) enumerate missing information blocking a complete answer, and
(iii) avoid hallucinations or speculative inference.
Thus, $(E_k, G_k)$ serves as a stochastic approximation to the idealized $(E_k^\star, G_k^\star)$:
\begin{equation}
    (E_k, G_k) \approx (E_k^\star, G_k^\star),
\end{equation}
with deviations arising from LLM extraction noise.
This perspective reconciles the formal update rule with the prompt-driven practical implementation.

\subsection{Correctness Properties under Idealized Extraction}
Although the practical instantiation lacks deterministic guarantees, the idealized tracker in Definition~\ref{def:eg_update} satisfies several intuitive properties essential for closed-loop retrieval.

\begin{theorem}[Properties of the Idealized Tracker]
\label{thm:eg_properties}
Assume that for all $k$ and all $r \in R(q)$, we have $r \in E_k^\star$ if and only if there exists some $m \in \bigcup_{j \le k} S_j$ such that $m \models r$.
Then the following hold:
\begin{enumerate}
    \item \textbf{Monotonicity:}
    $E_{k-1}^\star \subseteq E_k^\star$ and $G_k^\star \subseteq G_{k-1}^\star$ for all $k \ge 1$.
    \item \textbf{Soundness:}
    If $m \models r$ for some retrieved memory $m \in S_k$, then $r \in E_k^\star$.
    \item \textbf{Completeness at convergence:}
    If every requirement $r \in R(q)$ is supported by some $m \in \bigcup_{j \le K} S_j$ with $m \models r$, then $E_K^\star = R(q)$ and hence $G_K^\star = \varnothing$.
\end{enumerate}
\end{theorem}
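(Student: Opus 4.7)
The plan is to derive all three properties from the global characterization assumed in the theorem, namely $r \in E_k^\star \iff \exists\, m \in \bigcup_{j \le k} S_j$ with $m \models r$. Before unpacking the three items, I would first confirm that this characterization is consistent with the recursive update in Definition~\ref{def:eg_update}. The check is a short induction on $k$: for the base case $k=0$ no retrieval has occurred, so both sides agree on $\varnothing$; for the inductive step, unfolding $E_k^\star = E_{k-1}^\star \cup \{r : \exists m \in S_k,\ m \models r\}$ and applying the inductive hypothesis gives $E_k^\star = \{r : \exists m \in \bigcup_{j \le k-1} S_j,\ m \models r\} \cup \{r : \exists m \in S_k,\ m \models r\} = \{r : \exists m \in \bigcup_{j \le k} S_j,\ m \models r\}$, matching the global form.

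For monotonicity, I would appeal to the inclusion $\bigcup_{j \le k-1} S_j \subseteq \bigcup_{j \le k} S_j$: any requirement $r$ witnessed at iteration $k-1$ remains witnessed at iteration $k$, hence $E_{k-1}^\star \subseteq E_k^\star$. The dual statement $G_k^\star \subseteq G_{k-1}^\star$ is then immediate by complementation in $R(q)$, since $G_k^\star = R(q) \setminus E_k^\star$. For soundness, I would just unfold definitions: if $m \in S_k$ and $m \models r$, then $m \in \bigcup_{j \le k} S_j$, so the global characterization places $r$ in $E_k^\star$. For completeness at convergence, the hypothesis is that for every $r \in R(q)$ there exists some $m \in \bigcup_{j \le K} S_j$ with $m \models r$; the global characterization applied in the forward direction then yields $R(q) \subseteq E_K^\star$, and since $E_K^\star \subseteq R(q)$ by construction, equality holds and $G_K^\star = \varnothing$.

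The main obstacle, such as it is, lies not in any of the three bullets individually but in cleanly justifying the consistency step above, i.e., that the recursive update in Definition~\ref{def:eg_update} actually realizes the cumulative characterization assumed in the theorem's preamble. Once this alignment is made explicit, monotonicity, soundness, and completeness each reduce to a one-line set-theoretic manipulation. I would present the consistency lemma as a short preliminary paragraph and then write each of the three properties as its own displayed implication, keeping the proof self-contained but without belaboring the elementary set algebra.
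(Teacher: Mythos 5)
Your proof is correct and follows essentially the same elementary set-theoretic route as the paper: the paper derives (1) and (2) directly from the recursive update in Definition~\ref{def:eg_update} and disposes of (3) by ``repeated application of the update rule,'' which is exactly the induction you spell out in your consistency lemma. The only difference is organizational --- you route all three items through the cumulative characterization (which the theorem's preamble already assumes, so your preliminary induction is a sanity check rather than a logical necessity) --- and your explicit treatment of that step is, if anything, slightly more careful than the paper's.
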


\begin{proof}
(1) By Definition~\ref{def:eg_update},
\begin{equation}
    E_k^\star = E_{k-1}^\star \cup 
    \big\{ r \in R(q) \,\big|\, \exists m \in S_k,\; m \models r \big\},
\end{equation}
so $E_{k-1}^\star \subseteq E_k^\star$.
Since $G_k^\star = R(q) \setminus E_k^\star$ and $E_{k-1}^\star \subseteq E_k^\star$, we obtain $G_k^\star \subseteq G_{k-1}^\star$.

(2) If $m \models r$ for some $m \in S_k$, then by Definition~\ref{def:eg_update} we have $r \in \{ r' \in R(q) \mid \exists m' \in S_k,\; m' \models r' \} \subseteq E_k^\star$.

(3) If every $r \in R(q)$ is supported by some $m \in \bigcup_{j \le K} S_j$ with $m \models r$, then repeated application of the update rule ensures that each such $r$ is eventually added to $E_K^\star$.
Hence $E_K^\star = R(q)$ and therefore $G_K^\star = R(q) \setminus E_K^\star = \varnothing$.
\end{proof}

These properties characterize the target behavior that the LLM-based tracker implementation aims to approximate.

\subsection{Robustness Considerations}
Since real LLMs introduce extraction noise, the practical tracker may deviate from the idealized $(E_k^\star, G_k^\star)$, for example, through false negatives (missing evidence), false positives (hallucinated evidence), or unstable gap estimates.
In the main text (Sec.~\ref{sec:evi-gap} and Sec.~\ref{sec:other_exp}), we study these effects empirically by injecting noisy or contradictory memories and measuring their impact on routing decisions and final answer quality.
The formal abstraction above serves as the reference model against which these robustness behaviors are interpreted.

\subsection{Approximation Bias of the LLM Tracker}

The abstraction in this section assumes access to an ideal tracker that updates ($\mathcal{E}_k$, $\mathcal{G}_k$) exactly according to the requirement–support relation $m \models r$. In practice, {\proj} uses an LLM-generated tracker ($\mathcal{E}_k$, $\mathcal{G}_k$), which only approximates this ideal update. This introduces several forms of approximation bias: i) \textit{Coverage bias} (false negatives): supported requirements $r \in R(q)$ that are omitted from $\mathcal{E}_k$; ii) \textit{Hallucination bias} (false positives): requirements $r$ that appear in $\mathcal{E}_k$ even though no retrieved memory item supports them; iii) \textit{Granularity bias}: cases where the tracker records a coarser fact (e.g., a year) but the requirement space $R(q)$ contains a finer predicate (e.g., an exact date), so the ideal requirement is never fully satisfied.

\subsection{Toy example of the granularity bias}
The ``\textit{Melanie painted a sunrise}'' case in Sec.~\ref{sec:other_exp} provides a concrete illustration of granularity bias. The question asks ``\textit{When did Melanie paint a sunrise?}'', and in our setup the correct answer is the year 2022. Under the ideal abstraction, however, the requirement space $R(q)$ implicitly contains a fine-grained predicate $r_{\text{date}}$ corresponding to the full year–month–day of the painting event. The memory store only contains a coarse statement such as ``\textit{Melanie painted the lake sunrise image last year (2022).}''

In the ideal tracker, no memory item $m$ satisfies $m \models r_{\text{date}}$, so the precondition of Theorem~\ref{thm:eg_properties}'s completeness clause is violated and the ideal gap $\mathcal{G}_k$ never becomes empty. The practical LLM tracker mirrors this behavior: it quickly recovers the year 2022 as evidence, but continues to treat the exact date as a remaining gap, eventually hitting the iteration budget without fully closing Gk. This example shows that some apparent ``failures'' of the approximate tracker are in fact structural: they arise from a mismatch between the granularity of $R(q)$ and the information actually present in the memory store.

\section{Experimental Settings}
\subsection{Baselines}\label{appdx:baseline}

We select four groups of advanced methods as baselines: 1) memory systems, including A-mem~\citep{xu2025amem}, LangMem~\citep{langmem_blog2025}, and Mem0~\citep{chhikara2025mem0}; 2) agentic retrievers, like Self-RAG~\citep{asai2024self}. We also design a RAG-CoT-RAG (RCR) pipeline as a strong agentic retriever baseline combining both RAG~\citep{lewis2020retrieval} and Chain-of-Thoughts (CoT)~\citep{wei2022chain}; 3) backend baselines, including chunk-based (RAG~\citep{lewis2020retrieval}) and graph-based (Zep~\citep{rasmussen2025zep}) memory storage, demonstrating the plug-in capability of {\proj} across different retriever backends. Moreover, `Full-Context' is widely used as a strong baseline and, when the entire conversation fits within the model window, serves as an empirical upper bound on J score~\citep{chhikara2025mem0,wang2025mirix}. More detailed introduction of these baselines is shown in Appendix~\ref{appdx:baseline}.

We divide our groups into four groups: memory systems, agentic retrievers, backend baselines, and full-context.

\subsubsection{Memory systems}
In this group, we consider recent advanced memory systems, including \textbf{A-mem}~\citep{xu2025amem}, \textbf{LangMem}~\citep{langmem_blog2025}, and \textbf{Mem0}~\citep{chhikara2025mem0}, to demonstrate the comprehensively strong capability of {\proj} from a memory control perspective.

\textbf{A-mem}~\citep{xu2025amem}\footnote{\url{https://github.com/WujiangXu/A-mem}}. A-Mem is an agent memory module that turns interactions into atomic notes and links them into a Zettelkasten-style graph using embeddings plus LLM-based linking.

\textbf{LangMem}~\citep{langmem_blog2025}. LangMem is LangChain’s persistent memory layer that extracts key facts from dialogues and stores them in a vector store (e.g., FAISS/Chroma) for later retrieval.

\textbf{Mem0}~\citep{chhikara2025mem0}\footnote{\url{https://github.com/mem0ai/mem0}}. Mem0 is an open-source memory system that enables an LLM to incrementally summarize, deduplicate, and store factual snippets, with an optional graph-based memory extension.

\subsubsection{Agentic Retrievers}
In this group, we examine the agentic structures underlying memory retrieval to show the advanced performance of {\proj} on memory retrieval, and particularly, showing the advantage of the agentic structure of {\proj}. To validate this, we include Self-RAG~\citep{asai2024self} and design a strong heuristic baseline, RAG-CoT-RAG (RCR), which combines RAG and CoT~\citep{wei2022chain}.

\textbf{Self-RAG}~\citep{asai2024self}. A model-driven retrieval controller where the LLM decides, at each step, whether to answer or issue a refined retrieval query. Unlike {\proj}, retrieval decisions in Self-RAG are implicit in the model’s chain-of-thought, without explicit state tracking. We reproduce their original code and prompt to suit our task.

\textbf{RAG-CoT-RAG (RCR)}. We design a strong heuristic baseline that extends beyond ReAct~\citep{yao2022react} by performing one initial retrieval~\citep{lewis2020retrieval}, a CoT~\citep{wei2022chain} step to identify missing information, and a second retrieval using a refined query. It provides multi-step retrieval but lacks an explicit evidence-gap state or a general controller.

\subsubsection{Backend Baselines}
In this group, we incorporate vanilla RAG~\citep{lewis2020retrieval} and Zep~\citep{rasmussen2025zep} as retriever backends for {\proj} to demonstrate the advantages of {\proj}'s plug-in design. The former is a chunk-based method while the latter is a graph-based one, which cover most types of existing memory systems.

\textbf{Vanilla RAG}~\citep{lewis2020retrieval}. The vanilla RAG retrieves the top-$k$ relevant snippets from the query once and provides a direct answer, without iterative retrieval or reasoning-based refinement. The other retrieval setting ($n_{\text{chk}}$, chunk size, etc.) is the same as that in {\proj}.

\textbf{Zep}~\citep{rasmussen2025zep}. Zep is a hosted memory service that builds a time-aware knowledge graph over conversations and metadata to support fast semantic and temporal queries. We implement their original code.

\subsubsection{Full-Context}
Lastly, we include \textbf{Full-Context} as a strong baseline, which provides the model with the entire conversation or memory buffer without retrieval, serving as an upper-bound reference that is unconstrained by retrieval errors or missing information.

\subsection{Other protocols.} \label{appdx:exp_other}
For all chunk-based methods like RAG~\citep{lewis2020retrieval}, Self-RAG~\citep{asai2024self}, RAG-CoT-RAG, and {\proj} (RAG retriever), we set the embedding model as \texttt{text-embedding-large-3}~\citep{openai2024embeddinglarge3} and use a re-ranking strategy~\citep{reimers2019sentence} (\texttt{ms-marco-MiniLM-L-12-v2}) to search relevant memories rather than just similar ones. The chunk size is selected from \{128, 256, 512, 1024\} using the \texttt{GPT-4o-mini} backend when $n_{\text{max}}=1$ and $n_{\text{chk}}=1$, and we ultimately choose 256. This chunk size is also in line with Mem0~\citep{chhikara2025mem0}.

\begin{table*}[tb!]
\centering
\caption{\label{tab:locomo_alignment}The alignment of the orders and categories in LoCoMo dataset.
}
\resizebox{0.80\textwidth}{!}{
\begin{tabular}{c|ccccc}
\toprule
\textbf{Category} & \textbf{Multi-Hop} & \textbf{Temporal} & \textbf{Open-Domain} & \textbf{Single-Hop} & \textbf{Adversarial} \\
\midrule
Order & Category 1 & Category 2 & Category 3 & Category 4 & Category 5 \\ \midrule
\# Questions & 282 & 321 & 96 & 830 & 445\\
\bottomrule
\end{tabular}
}
\end{table*}

\subsection{Re-alignment of LoCoMo dataset}\label{appdx:data_misalign}
\paragraph{Misalignment in existing works.}
Although the correct order of the different categories is not explicitly reported in LoCoMo~\citep{maharana2024evaluating}, we can infer it from the number of questions in each category. The correct alignment is shown in Table~\ref{tab:locomo_alignment}.
We believe this clarification could benefit the LLM memory community.

\paragraph{Repeated questions in LoCoMo dataset.} Note that the number of single-hop and adversarial questions is 841 and 446 in the original LoCoMo, while the number is 830 and 445 based on our count, due to 12 repeated questions. In the following, the first question is repeated in both the single-hop and adversarial categories in the 2rd conversation (we remove the one in the adversarial category), while the remaining 11 questions are repeated in the single-hop category in the 8th conversation.
\begin{enumerate}
\item What did Gina receive from a dance contest? (conversation 2, question 62), (conversation 2, question 96)
\item What are the names of Jolene's snakes? (conversation 8, question 17), (conversation 8, question 90)
\item What are Jolene's favorite books? (conversation 8, question 26), (conversation 8, question 91)
\item What music pieces does Deborah listen to during her yoga practice? (conversation 8, question 43), (conversation 8, question 92)
\item What games does Jolene recommend for Deborah? (conversation 8, question 59), (conversation 8, question 93)
\item What projects is Jolene planning for next year? (conversation 8, question 62), (conversation 8, question 94)
\item Where did Deborah get her cats? (conversation 8, question 63), (conversation 8, question 95)
\item How old are Deborah's cats? (conversation 8, question 64), (conversation 8, question 96)
\item What was Jolene doing with her partner in Rio de Janeiro? (conversation 8, question 68), (conversation 8, question 97)
\item Have Deborah and Jolene been to Rio de Janeiro? (conversation 8, question 70), (conversation 8, question 98)
\item When did Jolene's parents give her first console? (conversation 8, question 73), (conversation 8, question 99)
\item What do Deborah and Jolene plan to try when they meet in a new cafe? (conversation 8, question 75), (conversation 8, question 100)
\end{enumerate}

\begin{table*}[tb!]
\centering
\caption{\label{tab:repeated}Repeated experiments of {\proj} in the main results in Table~\ref{tab:main}.
}
\resizebox{0.98\textwidth}{!}{
\begin{tabular}{c|c|c|llll|l}
\toprule
\textbf{LLM} & \textbf{Retriever} & \textbf{Run Order} & \textbf{1. Multi-Hop} & \textbf{2. Temporal} & \textbf{3. Open-Domain} & \textbf{4. Single-Hop} & \textbf{Overall} \\
\midrule\midrule
\multirow{8}{*}{\rotatebox[origin=c]{90}{\textbf{GPT-4o-mini}}}
& \multirow{4}{*}{Zep} & 1 & 68.09 & 73.52 & 68.75 & 80.72 & 76.13 \\
&                      & 2 & 69.86 & 72.59 & 67.71 & 80.36 & 76.00 \\
&                      & 3 & 70.21 & 75.39 & 64.58 & 80.72 & 76.65 \\
\cmidrule(lr){3-8}
& & \text{mean} $\pm$ \text{std} & 69.39 {\footnotesize $\pm$ 0.41} & 73.83 {\footnotesize$\pm$ 1.40} & 67.01 {\footnotesize$\pm$ 1.64} & 80.60 {\footnotesize$\pm$ 0.18} & 76.26 {\footnotesize$\pm$ 0.33} \\
\cmidrule(lr){2-8}
& \multirow{4}{*}{RAG} & 1 & 71.63 & 77.26 & 61.46 & 89.28 & 81.75 \\
&                      & 2 & 70.21 & 76.01 & 59.38 & 89.40 & 81.16 \\
&                      & 3 & 72.34 & 75.39 & 62.50 & 89.64 & 81.75 \\
\cmidrule(lr){3-8}
& & \text{mean} $\pm$ \text{std} & 71.39 {\footnotesize $\pm$ 1.08} & 76.22 {\footnotesize$\pm$ 0.95} & 61.11 {\footnotesize$\pm$ 1.59} & 89.44 {\footnotesize$\pm$ 0.18} & 81.56 {\footnotesize$\pm$ 0.34} \\
\midrule\midrule
\multirow{8}{*}{\rotatebox[origin=c]{90}{\textbf{GPT-4.1-mini}}}
& \multirow{4}{*}{Zep} & 1 & 78.72 & 78.50 & 72.92 & 84.34 & 81.36 \\
&                      & 2 & 75.89 & 77.26 & 68.75 & 84.58 & 80.44 \\
&                      & 3 & 78.72 & 77.57 & 67.71 & 84.34 & 80.84 \\
\cmidrule(lr){3-8}
& & \text{mean} $\pm$ \text{std} & 77.78 {\footnotesize $\pm$ 1.44} & 77.78 {\footnotesize$\pm$ 0.26} & 69.79 {\footnotesize$\pm$ 1.04} & 84.42 {\footnotesize$\pm$ 0.12} & 80.88 {\footnotesize$\pm$ 0.24} \\
\cmidrule(lr){2-8}
& \multirow{4}{*}{RAG} & 1 & 81.56 & 83.18 & 69.79 & 91.93 & 86.79 \\
&                      & 2 & 82.62 & 80.69 & 75.00 & 92.65 & 87.18 \\
&                      & 3 & 79.43 & 82.55 & 69.79 & 91.93 & 86.27 \\
\cmidrule(lr){3-8}
& & \text{mean} $\pm$ \text{std} & 81.20 {\footnotesize $\pm$ 1.62} & 82.14 {\footnotesize$\pm$ 1.29} & 71.53 {\footnotesize$\pm$ 3.01} & 92.17 {\footnotesize$\pm$ 0.42} & 86.75 {\footnotesize$\pm$ 0.46} \\
\bottomrule
\end{tabular}
}
\end{table*}

\section{Experimental Results}
\subsection{Repeated Experiments.}
For the LoCoMo dataset, we show the repeated experiments of {\proj} in Table~\ref{tab:repeated}.

\begin{figure}[tb!]
\centering
\includegraphics[width=0.48\textwidth]{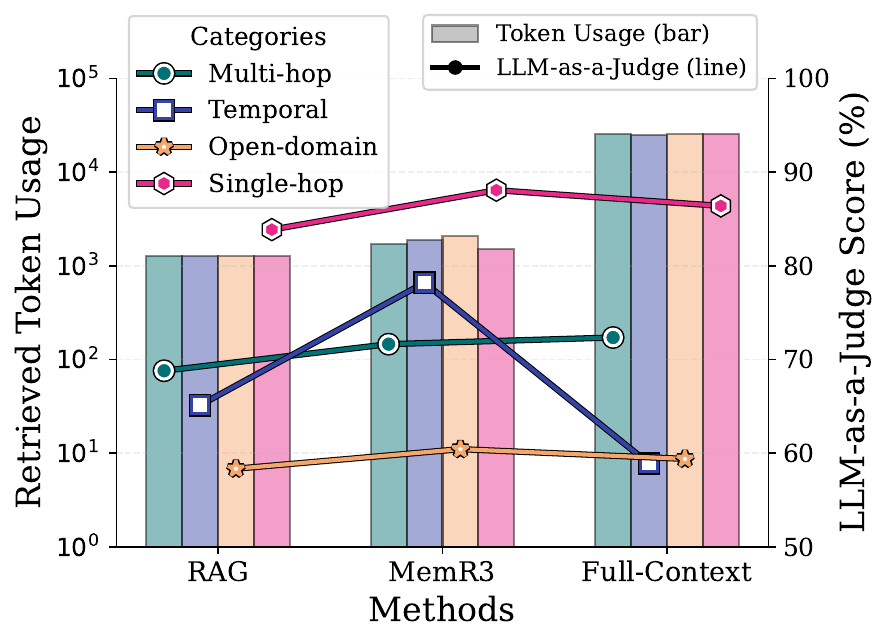}
\caption{Average token consumption of the retrieved snippets (left y-axis) and LLM-as-a-Judge (J) Score (right y-axis) of RAG, {\proj}, and Full-Context across four categories.}
\label{fig:token_consume}
\end{figure}

\subsection{Token Consumption}
In Table~\ref{fig:token_consume}, we compare the average token consumption of the retrieved snippets and J score of RAG, {\proj}, and Full-Context methods across four categories. The chunk size of RAG and {\proj} are both set as $n_{\text{chk}}=5$, while $n_{\text{max}}=2$ for {\proj}. We observe that {\proj} outperforms RAG across all four categories with only a few additional tokens. While Full-Context consumes significantly more tokens than {\proj}, it surpasses {\proj} only on multi-hop questions.

\end{document}